\newif\ifieee
\LetLtxMacro{\originaleqref}{\eqref}
\renewcommand{\eqref}{eqn.~\originaleqref}
\newtheorem{theorem}{Theorem}
\title{Vanishing Nodes: Another Phenomenon That Makes Training Deep Neural Networks Difficult}
\author{%
  Wen-Yu Chang, \;Tsung-Nan Lin
    \thanks{Wen-Yu Chang and Tsung-Nan Lin were with the Graduate Institute of Communication Engineering, National Taiwan University, Taipei, Taiwan.}
}
\begin{document}

\markboth{Journal of the IEEE Transactions on Neural Networks and Learning Systems}%
{Chang \MakeLowercase{\textit{et al.}}: Vanishing Nodes: Another Phenomena That Makes Training Deep Neural Networks Difficult}

\maketitle

\begin{abstract}
It is well known that the problem of vanishing/exploding gradients is a challenge when training deep networks.
In this paper, we describe another phenomenon, called \textit{vanishing nodes}, that also increases the difficulty of training deep neural networks.
As the depth of a neural network increases, the network's hidden nodes have more highly correlated behavior.
This results in great similarities between these nodes.
The redundancy of hidden nodes thus increases as the network becomes deeper.
We call this problem vanishing nodes,
and we propose
the metric vanishing node indicator (VNI) for quantitatively measuring the degree of vanishing nodes.
The VNI can be characterized by the network parameters, which is shown analytically to be proportional to the depth of the network and inversely proportional to the network width.
The theoretical results show that the effective number of nodes vanishes to one when the VNI increases to one (its maximal value),
and that vanishing/exploding gradients and vanishing nodes are two different challenges that increase the difficulty of training deep neural networks.
The numerical results from the experiments suggest that the degree of vanishing nodes will become more evident during back-propagation training,
and that when the VNI is equal to 1, the network cannot learn simple tasks (e.g. the XOR problem) even when the gradients are neither vanishing nor exploding.
We refer to this kind of gradients as the \textit{walking dead gradients}, which cannot help the network converge when having a relatively large enough scale.
Finally, the experiments show that the likelihood of failed training increases as the depth of the network increases.
The training will become much more difficult due to the lack of network representation capability.
\end{abstract}

\ifieee
    \begin{IEEEkeywords}
    machine learning,
    deep learning,
    learning theory,
    vanishing nodes.
    \end{IEEEkeywords}

\IEEEpeerreviewmaketitle
\fi

\section{Introduction} \label{introduction}
\ifieee
\IEEEPARstart{D}{eep}
\else
Deep
\fi
neural networks (DNN) have succeeded in various fields, including computer vision \cite{alexnet}, speech recognition \cite{speech}, machine translation \cite{google_trans}, medical analysis \cite{medical} and human games \cite{alphago}. Some results are comparable to or even better than those of human experts.

State-of-the-art methods in many tasks have recently used increasingly \textit{deep} neural network architectures. The performance has improved as networks have been made deeper. For example, some of the best-performing models \cite{resnet1, resnet2} in computer vision have included hundreds of layers.

Moreover, recent studies have found that as the depth of a neural network increases, problems, such as vanishing or exploding gradients, make the training process more challenging. \cite{xavier, he} investigated this problem in depth and suggested that initializing the weights in the appropriate scales can prevent gradients from vanishing or exploding exponentially.
\cite{mft:expo, mft:info} also studied (via mean field theory) how vanishing/exploding gradients arise and provided a solid theoretical discriminant to determine whether the propagation of gradients is vanishing/exploding. 

Inspired by previous studies, we investigated the correlation between hidden nodes and discovered that a phenomenon that we call \textit{vanishing nodes} can also affect the capability of a neural network.
In general, the hidden nodes of a neural network become highly correlated as the network becomes deeper.
A correlation between nodes implies their similarity, and a high degree of similarity between nodes produces redundancy.
Because a sufficiently large number of effective nodes is needed to approximate an arbitrary function, the redundancy of the nodes in the hidden layers may debilitate the representation capability of the entire network.
Thus, as the depth of the network increases, the redundancy of the hidden nodes may increase and hence affect the network's trainability. We refer to this phenomenon as that of \textit{vanishing nodes}.

In practical scenarios, redundancy of the hidden nodes in a neural network is inevitable.
For example, a well-trained feed-forward neural network with 500 nodes in each layer to do the 10-class
classification should have many highly correlated hidden nodes. Since we only have 10 nodes
in the output layer, to achieve the classification task, the high redundancy of the 500 hidden nodes is
not surprising, and is even necessary.

We propose a \textit{vanishing node indicator (VNI)}, which is the weighted average of the squared correlation coefficients, as a quantitative metric for the occurrence of vanishing nodes. VNI can be theoretically approximated via the results on the spectral density of the end-to-end Jacobian.
The approximation of VNI depends on the network parameters, including the width, depth, distribution of weights, and the activation functions, and is shown to be simply proportional to the depth of the network and inversely proportional to the width of the network.

When the VNI of a network is equal to 1 (i.e. all the hidden nodes are
highly correlated, the redundancy of nodes is at a maximal level),
we call this situation \textit{network collapse}.
The representation power is not sufficient, and hence the network cannot successfully learn most training tasks.
The network collapse theorem and its proof are provided to show that the effective number of nodes vanishes to 1 as the VNI approaches 1.
In addition, the numerical results show that back-propagation training also intensifies the correlations of the hidden nodes when we consider a deep network.

Also, another weight initialization method is proposed for tweaking the initial VNI of a network to 1 even when the network has a relatively small depth.
This weight initialization meets the "norm-preserving" condition of \cite{xavier, mft:spectral}.
Numerical results for this weight initialization method show that when the VNI is set to 1, the network cannot learn simple tasks (e.g. the XOR problem) even when vanishing/exploding gradients do not appear.
This implies that when the VNI is close to 1, the back-propagated gradients, which are neither vanishing nor exploding, cannot successfully train the network, and hence we call this kind of gradient \textit{walking dead gradients}.

Finally, we show that vanishing/exploding gradients and vanishing nodes are two different problems because the two problems may arise from different specific conditions.
The experimental results show that the likelihood of failed training increases as the depth of the network increases. The training will become much more difficult due to the lack of network representation capability. 

This paper is organized as follows: some related works are discussed in Section \ref{related}. The vanishing nodes phenomenon is introduced in Section \ref{why}. A theoretical analysis and quantitative metric are presented in Section \ref{why}. Section \ref{compare} compares the problem of vanishing nodes with the problem of vanishing/exploding gradients. Section \ref{experiments} presents the experimental results and Section \ref{conclusion} presents our conclusions.

\section{Related Work} \label{related}

Problems in the training of deep neural networks have been encountered in several studies.
For example, \cite{xavier, he} investigated vanishing/exploding gradient propagation and gave weight initialization methods as the solution. \cite{evop} suggested that vanishing/exploding gradients might be related to the sum of the reciprocals of the widths of the hidden layers.
\cite{opt_prob, saddle} stated that for training deep neural networks, saddle points are more likely to be a problem than local minima.
\cite{degrade1, degrade2, resnet1} explained the \textit{degradation} problem: the performance of a deep neural network degrades as its depth increases.

The correlation between the nodes of hidden layers within a deep neural network is the main focus of the present paper, and several kinds of correlations have been discussed in the literature.
\cite{mft:info} surveyed the propagation of the correlation between two different inputs after several layers.
\cite{whiten1, whiten2} suggested that the input features must be whitened (i.e., zero-mean, unit variances and uncorrelated) to achieve a faster training speed.

Dynamical isometry is one of the conditions that make ultra-deep network training more feasible.
\cite{mft:linear} reported the dynamical isometry theoretically ensures a depth-independent learning speed.
\cite{mft:sigmoid, mft:spectral} suggested several ways to achieve dynamical isometry for various settings of network architecture, and \cite{mft:cnn, mft:rnn} practically trained ultra-deep networks in various tasks.

\section{Vanishing Nodes: Correlation Between Hidden Nodes} \label{why}

In this section, the correlations between neurons in the hidden layers are investigated.
If two neurons are highly correlated (for example, the correlation coefficient is equal to $+1$ or $-1$), one of the neurons becomes redundant.
A great similarity between nodes may reduce the effective number of neurons within a network.
In some cases, the correlation of hidden nodes may disable the entire network. This phenomenon is called \textit{vanishing nodes}.


First, consider a deep feed-forward neural network with depth $L$.
For simplicity of analysis, we assume all layers have the same width $N$.
The weight matrix of layer $l$ is $\mathbf{W}_l\in \mathbb{R}^{N\times N}$, the bias of layer $l$ is $\mathbf{b}_l\in \mathbb{R}^N$ (a column vector), and the common activation function of all layers is $\phi(\cdot):\mathbb{R}\rightarrow \mathbb{R}$. The input of the network is $\mathbf{x}_0$, and the nodes at output layer $L$ are denoted by $\mathbf{x}_L$. The pre-activation of layer $l$ is $\mathbf{h}_l\in \mathbb{R}^N$ (a column vector), and the post-activation of layer $l$ is $\mathbf{x}_l\in \mathbb{R}^N$ (a column vector). That is, $\forall l \in \{1, ..., L\}$,

\begin{equation}
    \mathbf{h}_l=\mathbf{W}_l\mathbf{x}_{l-1}+\mathbf{b}_l,\;\;\;
    \mathbf{x}_{l}=\phi(\mathbf{h}_l).
\label{network_eqn}
\end{equation}

The variance of node $i$ is defined as $\sigma_i^2\overset{\Delta}{=}\mathbb{E}_{\mathbf{x}_0}[(x_{l(i)}-\overline{x_{l(i)}})^2]$, and the squared correlation coefficient ($\rho_{ij}^2$) between nodes $i$ and $j$ can be computed as $\rho_{ij}^2\overset{\Delta}{=}
\frac
{\mathbb{E}_{\mathbf{x}_0}[(x_{l(i)}-\overline{x_{l(i)}})(x_{l(j)}-\overline{x_{l(j)}})]^2}
{\mathbb{E}_{\mathbf{x}_0}[(x_{l(i)}-\overline{x_{l(i)}})^2]\mathbb{E}_{\mathbf{x}_0}[(x_{l(j)}-\overline{x_{l(j)}})^2]},$
where $\rho_{ij}^2$ ranges from $0$ to $1$.
Nodes $x_{l(i)}$ and $x_{l(j)}$ are highly correlated only if the magnitude of the correlation coefficient between them is nearly 1. $\rho_{ij}^2$ indicates the degree of similarity between node $i$ and node $j$.
If $\rho_{ij}$ is close to $+1$ or $-1$, then node $i$ can be approximated in a linear fashion by node $j$.
A great similarity indicates redundancy.
If the nodes of the hidden layers exhibit great similarity, the effective number of nodes will be much lower than the original network width. Therefore, we call this phenomenon the \textit{vanishing nodes problem}.




In the following section, we propose a metric to quantitatively describe the degree of vanishing of the nodes of a deep feed-forward neural network.
A theoretical analysis of this metric indicates that it is proportional to the depth of the network and inversely proportional to the width of the network.
This quantity is shown analytically to depend on the statistical properties of the weights and the nonlinear activation function. 


\subsection{Vanishing Node Indicator} \label{initial}

Consider the network architecture defined in \eqref{network_eqn}. In addition, the following assumptions are made: (1) The input $\mathbf{x}_0$ is zero-mean, and the features in $\mathbf{x}_0$ are independent and identically distributed. (2) All weight matrices $\mathbf{W}_l$ in each layer are initialized from the same distribution with variance $\sigma_w^2/N$. (3) All the bias vectors $\mathbf{b}_l$ in each layer are initialized to zero.


The input--output Jacobian matrix $\mathbf{J}\in\mathbb{R}^{N\times N}$  is defined as the first-order partial derivative of the output layer with respect to the input layer, which can be rewritten as $\frac{\partial\mathbf{x}_L}{\partial\mathbf{x}_0}=\prod_{l=1}^{L}\mathbf{D}_l\mathbf{W}_l$,
where $\mathbf{D}_l\overset{\Delta}{=} diag(\phi'(\mathbf{h}_l))$ is the derivative of the point-wise activation function $\phi$ at layer $l$.
To conduct an analysis similar to that of \cite{mft:linear}, consider the first-order forward approximation:
$\mathbf{x}_L-\overline{\mathbf{x}_L} \approx \mathbf{Jx}_0$. Therefore, the covariance matrix of the nodes ($\mathbf{C}\in\mathbb{R}^{N\times N}$) at the output layer  can be computed as

\begin{equation}
    \begin{aligned}
    \mathbf{C} 
    &\overset{\Delta}{=}
    \mathbb{E}_{\mathbf{x}_0}[(\mathbf{x}_L-\overline{\mathbf{x}_L})(\mathbf{x}_L-\overline{\mathbf{x}_L})^T]
    \approx
    \mathbb{E}_{\mathbf{x}_0}[(\mathbf{Jx}_0)(\mathbf{Jx}_0)^T]\\
    &=
    \mathbf{J}\mathbb{E}_{\mathbf{x}_0}[\mathbf{x}_0\mathbf{x}_0^T]\mathbf{J}^T
    =
    \sigma_x^2\mathbf{J}\mathbf{J}^T,
    \end{aligned}
    \label{covariance_eqn}
\end{equation}

where $\sigma_x^2$ is the common variance of the features in $\mathbf{x}_0$, and the expected values are calculated with respect to the input $\mathbf{x}_0$. For notational simplicity, we omit the subscript $\mathbf{x}_0$ of the expectations in the following equations. 
It can be easily derived that the squared covariance of nodes $i$ and $j$ is equal to the product of the squared correlation coefficient and the two variances. That is, $[C_{(ij)}]^2=\rho_{ij}^2\sigma_i^2\sigma_j^2$.

In this paper, we propose the \textit{vanishing node indicator} (VNI) $R_{sq}$ to quantitatively characterize the degree of vanishing nodes for a given network architecture. It is defined as follows:

\begin{equation}
    R_{sq}\overset{\Delta}{=}
    \frac{\sum_{i=1}^N\sum_{j=1}^N\rho_{ij}^2\sigma_i^2\sigma_j^2}
{\sum_{i=1}^N\sum_{j=1}^N\sigma_i^2\sigma_j^2}.
\label{rsq_def}
\end{equation}

VNI calculates the weighted average of the squared correlation coefficients $\rho_{ij}^2$ between output layer nodes with non-negative weights $\sigma_i^2\sigma_j^2$. Basically, VNI $R_{sq}$, which ranges from $1/N$ to $1$, summarizes the similarity of the nodes at the output layer.
If all nodes are independent of each other, the correlation coefficients $\rho_{ij}$ will be 0 (if $i\neq j$) or 1 (if $i=j$) and $R_{sq}$ will take on its minimum, which is $1/N$.
Otherwise, if all of the output nodes are highly correlated, then all the squared correlation coefficients $\rho_{ij}^2$ will be nearly 1, and therefore $R_{sq}$ will reach its maximum value $1$.
Note that the weights $\sigma_i^2\sigma_j^2$ in the weighted average can be interpreted as the importance of the output-layer nodes $i$ and $j$. If all of the output layer nodes have equal variances, VNI $R_{sq}$ is simply the average of the squared correlation coefficients $\rho_{ij}^2$.

\begin{figure*}[ht]
\centering
\subfloat[Network width $N=200$]{
  \centering
  \includegraphics[width=0.49\linewidth,trim={0 0 0 0.8cm},clip]{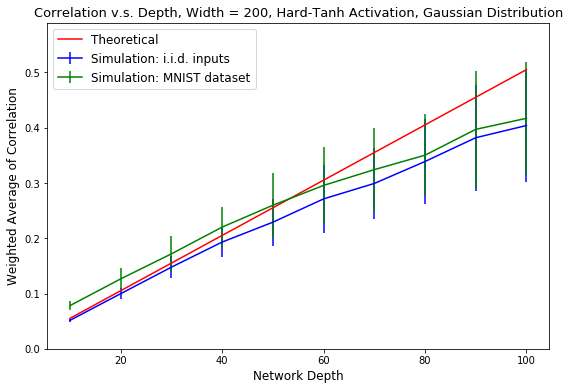}
  \label{fig:sec4_sim2_a}
}%
\subfloat[Network width $N=500$]{
  \centering
  \includegraphics[width=0.49\linewidth,trim={0 0 0 0.8cm},clip]{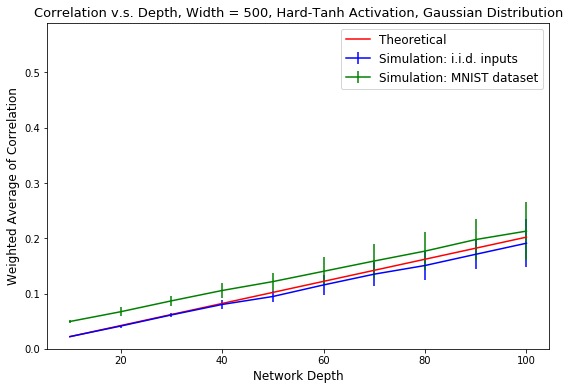}
  \label{fig:sec4_sim2_b}
}

\caption{
The results of VNI $R_{sq}$ with respect to network depth $L$ for network widths 200 and 500. The red line is calculated from \eqref{rsq_moment}, the blue line is computed from \eqref{rsq_def} with the input data of zero mean and i.i.d input data, and the green line is computed from \eqref{rsq_def} with MNIST data.
The VNI $R_{sq}$ expressed in \eqref{rsq_moment} is very close to the original definition in \eqref{rsq_def}.
}
\label{fig:sec4_sim2}
\end{figure*}

With the covariance matrix defined in \eqref{covariance_eqn} and the formulas for the traces of matrices, VNI $R_{sq}$ can be expressed in terms of the covariance matrix as
\begin{equation}
    \begin{aligned}
    R_{sq}
    &=\frac{
    \sum_{i=1}^N\sum_{j=1}^N\mathbb{E}_{\mathbf{x}_0}
    [(x_{L(i)}-\overline{x_{L(i)}})(x_{L(j)}-\overline{x_{L(j)}})]^2
    }{
    \sum_{i=1}^N\sum_{j=1}^N
    \mathbb{E}_{\mathbf{x}_0}[(x_{L(i)}-\overline{x_{L(i)}})^2]
    \mathbb{E}[(x_{L(j)}-\overline{x_{L(j)}})^2]
    }\\
    &=
    \frac{\sum_{i=1}^N\sum_{j=1}^N[C_{(ij)}]^2}
    {\sum_{i=1}^N\sum_{j=1}^NC_{(ii)}C_{(jj)}}
    =
    \frac{tr(\mathbf{C}{\mathbf{C}}^T)}
    {tr(\mathbf{C})^2}
    ,
    \end{aligned}
    \label{rsq_eqn}
\end{equation}
where $tr(\cdot)$ denotes the trace of a matrix.

From \eqref{covariance_eqn}, substituting $\sigma_x^2\mathbf{JJ}^T$ for $\mathbf{C}$ in \eqref{rsq_eqn}, and noting that $tr(\mathbf{A}^k)$ is equal to the sum of the $k$th powers of the eigenvalues of the symmetric matrix $\mathbf{A}$ \cite{matrix}, an approximation of $R_{sq}$ can be obtained:

\begin{equation}
    \begin{aligned}
    R_{sq}
    &\approx
    \frac{tr(\mathbf{JJ}^T\mathbf{JJ}^T)}{tr(\mathbf{JJ}^T)^2}
    =\frac{\sum_{k=1}^N\lambda_k^2}{(\sum_{k=1}^N\lambda_k)^2}\\
    &=\frac{N\cdot m_2}{(N\cdot m_1)^2}
    =\frac{m_2}{Nm_1^2},
    \end{aligned}
    \label{rsq_eigen}
\end{equation}

where $\lambda_k$ is the $k$th eigenvalue of $\mathbf{JJ}^T$, and $m_i$ is the $i$th moment of the eigenvalues of $\mathbf{JJ}^T$.



In \eqref{rsq_eigen}, we show that $R_{sq}$ is related to the expected moments of the eigenvalues of $\mathbf{JJ}^T$. Because the moments of the eigenvalues of $\mathbf{JJ}^T$ have been analyzed in previous studies \cite{mft:spectral}, we can leverage the recent results  by \cite{mft:spectral}: $m_1=(\sigma_w^2\mu_1)^L$, and $m_2=(\sigma_w^2\mu_1)^{2L}L\big(\frac{\mu_2}{\mu_1^2}+\frac{1}{L}-1-s_1\big)$,
where $\sigma_w^2/N$ is the variance of the initial weight matrices, $s_1$ is the first moment of the series expansion of the S-transform associated with the weight matrices, and $\mu_k$ is the $k$th moment of the series expansion of the moment generating function associated with the activation functions.
If we insert the expressions of $m_1$ and $m_2$ into \eqref{rsq_eigen}, we can obtain an approximation of the expected VNI:

\begin{equation}
    R_{sq}\approx \frac{L}{N}\Big(\frac{\mu_2}{\mu_1^2}+\frac{1}{L}-1-s_1\Big)
    =
    \frac{1}{N}+\frac{L}{N}\Big(\frac{\mu_2}{\mu_1^2}-1-s_1\Big)
    ,
    \label{rsq_moment}
\end{equation}

which shows that VNI is determined by the depth $L$, the width $N$, the moments of the activation functions $\mu_k$ and the statistical properties of the weights $s_1$. 
Because $R_{sq}$ ranges from $1/N$ to $1$, the approximation in \eqref{rsq_moment} is more accurate when $N>>L$.
Moreover, it can be easily seen that the correlation is inversely proportional to the width $N$ of the network, and proportional to the depth $L$ of the network.

To evaluate the accuracy of \eqref{rsq_moment} with respect to the original definition in \eqref{rsq_def}, we have designed the following experiments.
A network width, $N\in\{200, 500\}$, is set. The network depth $L$ is adjusted from $10$ to $100$ with the Hard-Tanh activation function.
One thousand data points with the distribution $\mathbf{x}_0\sim Gaussian(\mu_x=0, \sigma^2_x=0.1)$ and 50,000 training images from the MNIST dataset \cite{mnist} are fed into the network.
In each network architecture, the weights are initialized with scaled-Gaussian distribution \cite{xavier} of various random seeds for 100 runs.
The $R_{sq}$ calculated from \eqref{rsq_def} is then recorded to compute the mean and the standard deviation with respect to various network depths $L$.
The results are shown in Fig. \ref{fig:sec4_sim2} as the blue and green lines denoted ``Simulation i.i.d. inputs'' and ``Simulation MNIST dataset.''
The red line denoted by ``Theoretical'' is the result calculated from \eqref{rsq_moment}.
This experiment demonstrates that the VNI expressed in terms of the network parameters in \eqref{rsq_moment} is very close to the original definition in \eqref{rsq_def}.
Similar results are obtained with different activation functions (e.g., Linear, ReLU) and different weight initializations (e.g., scaled uniform distribution).

Fig. \ref{fig:sec4_sim3} plots the squared correlation coefficients between output nodes, which are evaluated with 50,000 training images in the MNIST dataset \cite{mnist} for various network architectures. White indicates no correlation, and black means that $\rho_{ij}^2 = 1$. Fig. \ref{fig:sec4_sim3} (a) plots the squared correlation coefficients for four architectures with the same network width ($N=200$) at different depths (5, 50, 300, and 1000). Fig. \ref{fig:sec4_sim3} (b) shows the architectures with the same depth ($L=100$) and different widths (5, 50, 200, 1000).  
This shows that the vanishing node phenomenon becomes evident with increasing depth and is inversely proportional to the width.

\subsection{Network collapse} \label{collapsing}

When the VNI of a network is equal to 1 (i.e. all the hidden nodes are
highly correlated, the redundancy of the nodes is at a maximal level),
and so the network collapses to only one effective node.
The representation power is not sufficient, and hence the network cannot successfully
learn most training tasks, e.g., classification tasks with more than two classes, or non-linearly separable classification tasks.

As theoretical evidence for this claim, we present the \textit{network collapse theorem}:

\begin{theorem}[Network collapsing]
    \label{collapse}
    The effective number of nodes becomes 1 when the VNI $R_{sq}$ is 1.
\end{theorem}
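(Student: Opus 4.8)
The plan is to first make precise the phrase ``effective number of nodes,'' since the theorem is only meaningful relative to that definition, and then reduce the statement to an elementary equality case. For the positive semidefinite output covariance $\mathbf{C}$ of \eqref{covariance_eqn}, with eigenvalues $\lambda_1\ge\cdots\ge\lambda_N\ge 0$, I would define the effective number of nodes as the reciprocal participation ratio
\[
    n_{\mathrm{eff}}\overset{\Delta}{=}\frac{tr(\mathbf{C})^2}{tr(\mathbf{C}\mathbf{C}^T)}=\frac{\big(\sum_{k=1}^N\lambda_k\big)^2}{\sum_{k=1}^N\lambda_k^2}=\frac{1}{R_{sq}}.
\]
This is the natural choice: $n_{\mathrm{eff}}$ counts how many eigen-directions of $\mathbf{C}$ carry comparable variance, it equals $N$ exactly when all node variances are equal and the nodes are mutually uncorrelated, and it is scale-invariant in $\mathbf{C}$. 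With this definition $R_{sq}=1$ gives $n_{\mathrm{eff}}=1$ at once, so the real content is to check that $n_{\mathrm{eff}}=1$ genuinely means the output layer behaves like a single node.

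Next I would establish the rank-one characterization. Writing $t=\sum_k\lambda_k$, one has $t^2=\sum_k\lambda_k^2+\sum_{k\ne l}\lambda_k\lambda_l\ge\sum_k\lambda_k^2$ because every cross term $\lambda_k\lambda_l$ is nonnegative; hence $R_{sq}\le 1$ always, with equality iff $\sum_{k\ne l}\lambda_k\lambda_l=0$, i.e. iff at most one eigenvalue of $\mathbf{C}$ is nonzero. So $R_{sq}=1$ forces $\mathrm{rank}(\mathbf{C})\le 1$, i.e. $\mathbf{C}=\lambda_1\mathbf{v}\mathbf{v}^T$ for a unit vector $\mathbf{v}$ (the case $\mathbf{C}=\mathbf 0$ being trivially degenerate). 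Equivalently — and more transparently — one can argue directly from \eqref{rsq_def}: $R_{sq}$ is a weighted average of the numbers $\rho_{ij}^2\in[0,1]$ with nonnegative weights $\sigma_i^2\sigma_j^2$, so $R_{sq}=1$ forces $\rho_{ij}^2=1$ for every pair $i,j$ with $\sigma_i^2\sigma_j^2>0$; that is, every non-degenerate output node is an exact affine function of every other. Both routes give the same conclusion.

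Finally I would interpret the rank-one covariance. Under the first-order approximation $\mathbf{x}_L-\overline{\mathbf{x}_L}\approx \mathbf{Jx}_0$ used throughout Section~\ref{initial}, $\mathbf{C}=\lambda_1\mathbf{v}\mathbf{v}^T$ means the fluctuation vector lies almost surely in the one-dimensional span of $\mathbf{v}$: there is a single scalar random variable $z$ (with $\mathbb{E}[z]=0$, $\mathbb{E}[z^2]=\lambda_1$) such that $x_{L(i)}-\overline{x_{L(i)}}=v_i\,z$ for all $i$. Thus all $N$ output nodes are deterministic affine images of one underlying degree of freedom — the network has ``collapsed'' onto a single effective node — which is the assertion. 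As a closing remark I would note that a layer carrying a single affine degree of freedom cannot realize a decision boundary separating more than two linearly separable classes, nor a non-linearly separable task such as XOR, which ties the theorem to the experiments in Section~\ref{experiments}.

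The main obstacle I anticipate is not the algebra but this modeling step: ``effective number of nodes'' is not defined in the excerpt, so the theorem is only as strong as that definition, and the real work is justifying that the participation-ratio definition faithfully captures ``number of non-redundant nodes'' — i.e. proving the chain $R_{sq}=1\iff\mathrm{rank}(\mathbf{C})=1\iff$ all non-degenerate nodes perfectly pairwise correlated $\iff$ all fluctuations confined to a one-dimensional subspace. Care is also needed with degenerate nodes ($\sigma_i^2=0$), which should be excluded from the count, and with the fact that $\mathbf{C}$ here is the first-order surrogate $\sigma_x^2\mathbf{JJ}^T$ rather than the exact covariance, so strictly the collapse statement holds for this approximation.
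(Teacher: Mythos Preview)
Your argument is correct, but it differs from the paper's in a substantive way: the paper does \emph{not} define the effective number of nodes as the participation ratio $1/R_{sq}$. Instead, it introduces an independent, threshold-based notion — the $\varepsilon$-effective number of nodes $N_e^\varepsilon\overset{\Delta}{=}\max\{t:\lambda_t\ge\varepsilon\lambda_1\}$ for a fixed $\varepsilon\in(0,1)$ — and then, under the normalization $\sigma_i^2=1$ (so $\sum_i\lambda_i=N$ and $\sum_i\lambda_i^2=N^2R_{sq}$), argues via an extremal eigenvalue configuration that $R_{sq}=1$ forces $N_e^\varepsilon=1$ for every $\varepsilon$. Your route, by contrast, bypasses the threshold machinery entirely: you observe that $(\sum_k\lambda_k)^2=\sum_k\lambda_k^2$ with $\lambda_k\ge 0$ is exactly the equality case of the $\ell_1$--$\ell_2$ inequality, hence $\mathrm{rank}(\mathbf{C})\le 1$, and then read off the single-degree-of-freedom interpretation directly. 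What the paper's approach buys is a definition of ``effective'' that is not a priori tied to $R_{sq}$, so the implication $R_{sq}=1\Rightarrow N_e^\varepsilon=1$ is not circular; what your approach buys is a shorter, cleaner argument with an explicit structural conclusion ($\mathbf{C}=\lambda_1\mathbf{v}\mathbf{v}^T$, all non-degenerate outputs affine in one scalar $z$) that the paper's quadratic-equation manipulation leaves implicit. Your caveat that the participation-ratio definition risks making the headline statement tautological is well taken — the rank-one characterization you supply is precisely the non-trivial content, and it is equivalent to what the paper's $\varepsilon$-ENN computation ultimately establishes.
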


\begin{proof}

    First, the $N$ random variables of the values of the nodes in a hidden layer are defined as $\{X_1, X_2, ..., X_N\}$.
    Without loss of generality, we assume that each $X_i$ follows a $\mathcal{N}(0, 1)$ distribution.
    Therefore, the covariance matrix of the random vector $[X_1, X_2, ..., X_N]^T$ is
    \begin{equation}
        \mathbf{C}=
        \begin{bmatrix}
            1 & \rho_{12} & \cdots & \rho_{1N} \\
            \rho_{21} & 1 & \cdots & \rho_{2N} \\
            \vdots & \vdots & \ddots & \vdots  \\
            \rho_{N1} & \rho_{N2} & \cdots & 1
        \end{bmatrix},
        \label{rv_cov}
    \end{equation}
    where $C_{ij}=\mathbb{E}[(X_i-\overline{X_i})(X_j-\overline{X_j})]$ as defined in
    \eqref{covariance_eqn}, $\rho_{ij}$ is the correlation coefficient between $X_i$ and $X_j$,
    and hence $\mathbf{C}$ is a symmetric matrix.
    By the definition of the VNI $R_{sq}$ from \eqref{rsq_def}, the $R_{sq}$
    can be represented as
    \begin{equation}
        R_{sq}=\frac{\sum_{i=1}^N\sum_{j=1}^N\rho_{ij}^2}{N^2}
        =\frac{tr(\mathbf{C}\mathbf{C}^T)}{N^2}=\frac{1}{N^2}tr(\mathbf{C}^2).
        \label{rv_rsq}
    \end{equation}

    Let the eigenvalues of $\mathbf{C}$ be $\lambda_1, \lambda_2, \dots, \lambda_N$.
    By the relation between trace of a matrix and its eigenvalues, we have
    \begin{equation}
        \begin{aligned}
            \sum_{i=1}^N\lambda_i&=tr(\mathbf{C})=N\\
        \sum_{i=1}^N\lambda_i^2&=tr(\mathbf{C}^2)=N^2R_{sq}.
        \end{aligned}
        \label{rv_eigen}
    \end{equation}

    To relate $R_{sq}$ with the redundancy of the random variables, a method for measuring the redundancy
    is needed.
    From a principle component analysis (PCA), the eigenvalue of $\mathbf{C}$ can represent the 
    energy (i.e. the variance) associated with each eigenvector.
    Therefore, we can use the distribution of the eigenvalues $\lambda_i$ to determine the proportion of
    redundant components, and hence the effective number of nodes can be evaluated.
    Similar to PCA, we first rearrange the order of the eigenvalues $\lambda_i$ so that
    $\lambda_1\geq\lambda_2\geq\cdots\geq\lambda_N\geq 0$.
    We suppose given a constant, $\varepsilon\in(0, 1)$, the \textit{effective threshold ratio} of the eigenvalues.
    That is, if $\lambda_i \geq \varepsilon\lambda_1$, then we say that the $i$th component $\lambda_i$ is
    $\varepsilon$-effective.
    Otherwise, the $i$th component $\lambda_i$ is said to be redundant.
    
    We introduce a new metric called the $\varepsilon$-effective number of nodes ($\varepsilon$-ENN):
    \begin{equation}
        \varepsilon\text{-ENN}
        \equiv N_{e}^\varepsilon
        \overset{\Delta}{=}max(\{t\in\mathbb{N}: \lambda_t\geq\varepsilon\lambda_1\}).
        \label{eENN_def}
    \end{equation}
    That is, $\varepsilon\text{-ENN}$ is the maximum number of $\varepsilon$-effective nodes.
    As in \eqref{rv_eigen}, the constraints on $\lambda_i$ are already derived.
    Also, it is intuitive that the maximum in \eqref{eENN_def} can simply be attained with eigenvalues
    $\{\lambda_1, \varepsilon\lambda_1, \dots, \varepsilon\lambda_1, 0, \dots, 0\dots, 0\}$, where there are 
    $(N_{e}^\varepsilon-1)$ $\varepsilon\lambda_1$ and $(N-N_e^\varepsilon)$ zeros.
    Inserting these eigenvalues into \eqref{rv_eigen}, we get
    \begin{equation}
        \begin{aligned}
            \lambda_1+(N_e^\varepsilon-1)\varepsilon\lambda_1 &= N \\
            \lambda_1^2+(N_e^\varepsilon-1)(\varepsilon\lambda_1)^2 &= N^2R_{sq}.
        \end{aligned}
        \label{effective_eigen}
    \end{equation}
    Inserting the first equation in \eqref{effective_eigen} into the second one, we get
    \begin{equation}
        [1+(N_e^\varepsilon-1)\varepsilon]^2
        =\Big(\frac{N}{\lambda_1}\Big)
        =\frac{1+(N_e^\varepsilon-1)\varepsilon^2}{R_{sq}},
        \label{quad_eenn}
    \end{equation}
    which is a solvable quadratic equation.

    Making use of the condition $R_{sq}=1$, we have
    \begin{equation}
        \begin{aligned}
            &[1+(N_e^\varepsilon-1)\varepsilon]^2=1+(N_e^\varepsilon-1)\varepsilon^2\\
            \Rightarrow&(N_e^\varepsilon-1)[(N_e^\varepsilon-2)\varepsilon+2]=0\\
            \Rightarrow& N_e^\varepsilon=1
        \end{aligned}
        \label{collapse_eqn}
    \end{equation}

    The solution for $N_e^\varepsilon$ in \eqref{quad_eenn} is $1$
    (no matter the value of $\varepsilon$).
    Therefore, the $\varepsilon$-effective number of nodes $N_e^\varepsilon$ becomes only 1
    when the VNI $R_{sq}$ is 1.
\end{proof}

Therefore, when the VNI approaches 1,
the network degrades to a single perceptron, and hence the
representation power of the network is insufficient to solve practical tasks (e.g., classification tasks with more than two classes or non-linearly separable classification tasks).




\subsection{Impact of back-propagation} \label{backprop}

In Section \ref{initial}, we showed that the correlation of a network will increase as the depth $L$ increases; in this section, we exploit the manner in which the back-propagation training process will influence the network correlation by the following experiments. 


\begin{figure*}
\centering
\newcommand{\myWidth}{0.8\textwidth}

\subfloat[Network width $N=200$]{
  \centering
  \includegraphics[width=0.8\textwidth]{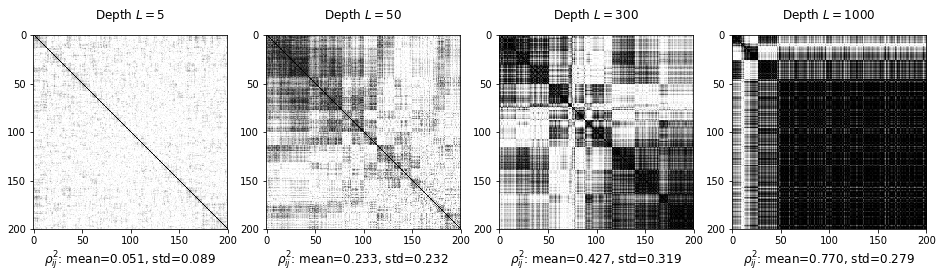}
  \label{fig:sec4_sim3_a}
}\hspace{3mm}%
\subfloat{
  \centering
  \includegraphics[width=10mm]{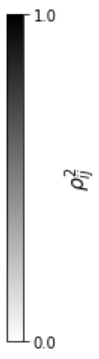}
}%

\subfloat[Network depth $L=100$]{
  \centering
  \includegraphics[width=0.8\textwidth]{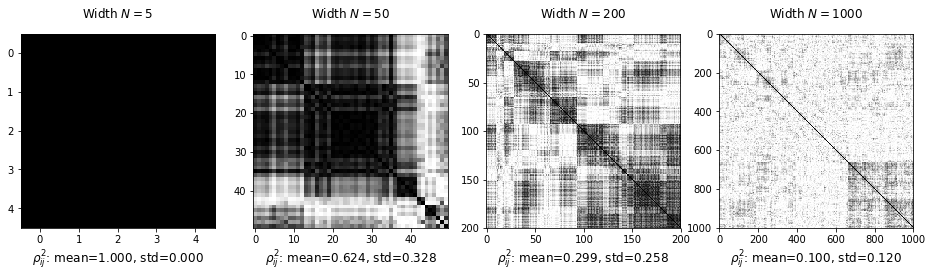}
  \label{fig:sec4_sim3_b}
}\hspace{3mm}%
\subfloat{
  \centering
  \includegraphics[width=10mm]{"colorbar_bw"}
}%

\caption{The magnitudes of correlation coefficient $\rho_{ij}$ between output nodes.
The black color means $\rho_{ij}^2=1$ while the white color indicates $\rho_{ij}^2=0$.
The top row shows that the correlation is positively related to the network depth $L$, and the bottom row that the correlation is negatively related to the network width $N$. Note that we have rearranged the node index to cluster the correlated nodes.}
\label{fig:sec4_sim3}
\end{figure*}

First, the same architecture defined in \eqref{network_eqn}, with $L=100$, $N=500$, tanh activation,  and scaled Gaussian initialization \cite{xavier}, is used. The network is then trained on the MNIST dataset \cite{mnist} and optimized with stochastic gradient descent (SGD) with a batch size of 100. The network is trained with three different learning rates for different seeds to initialize the weights for 20 runs. We then record the quartiles of VNI ($R_{sq}$) with respect to the training epochs, as shown in Fig. \ref{fig:sec5_sim1}.

The boundaries of the colored areas represent the first and the third quartiles (i.e., the 25th and the 75th percentiles), and the line represents the second quartile (i.e., the median) of $R_{sq}$ over 20 trials.
This shows that in some cases, VNI increases to 1 during the training process, otherwise VNI grows larger initially, and then decreases to a value which is larger than the initial VNI.
A severe intensification of VNI may occur, as shown by the blue line, which is trained at the learning rate of $10^{-2} $. 
Moreover, we observe that training will become much more difficult due to a lack of network representation capability as VNI $R_{sq}$ approaches 1.
Further discussion is provided in Section \ref{experiments} to investigate, by the use of various training parameters, the impact of VNI.

\begin{figure*}
\centering

\newcommand{\myWidth}{0.95\linewidth}
\includegraphics[width=\myWidth]{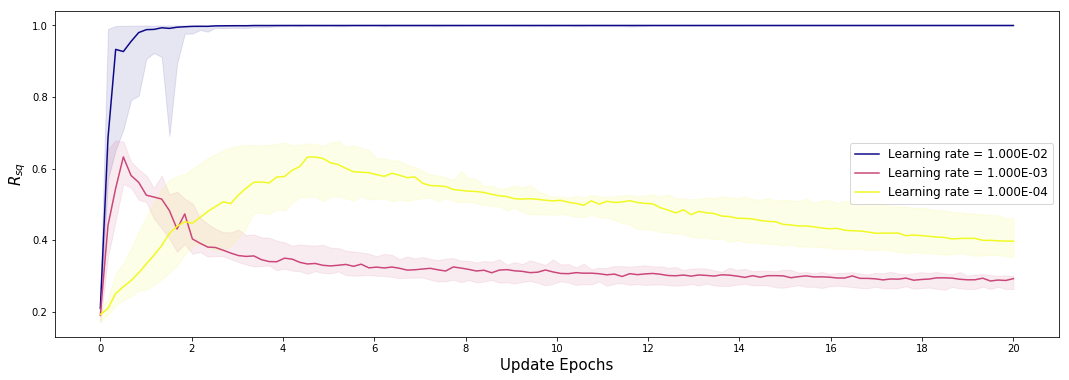}
\caption{Severe intensification of VNI (increases to 1) may occur, as shown by the blue line, which is trained with a learning rate of $10^{-2} $. Otherwise VNI rises initially, and then decreases to a value which is larger than the initial VNI.
}
\label{fig:sec5_sim1}
\end{figure*}

\section{Comparison with Exploding/Vanishing Gradients} \label{compare}

In this section, we explore whether the vanishing nodes phenomenon arises from the problem of exploding/vanishing gradients.
Exploding/vanishing gradients in deep neural networks are a problem regarding the scale of forward-propagated signals and back-propagated gradients, which exponentially explode/vanish as the networks grows deeper.
We perform a theoretical analysis of exploding/vanishing gradients and show analytically the difference between them and the newly identified problem of vanishing nodes.


As in a previous study \cite{xavier}, we use the variances of the hidden nodes to evaluate the scales of the back-propagated gradients. Consider the model and the assumptions in Section \ref{why} and an additional assumption: the gradient $\frac{\partial Cost}{\partial \mathbf{x}_L}$ of  the output layer is a zero-mean i.i.d. random (row) vector.
That is, $\mathbb{E}[\mathbf{x}_0{\mathbf{x}_0}^T] = \sigma_x^2\cdot\mathbf{I}$ and $\mathbb{E}\Big[\Big(\frac{\partial Cost}{\partial \mathbf{x}_L}\Big)^T\frac{\partial Cost}{\partial \mathbf{x}_L}\Big] = \sigma_y^2\cdot\mathbf{I}$,
where $\sigma_x^2$ and $\sigma_y^2$ are defined as the variances of the input layer nodes and output layer gradients, respectively.
Consider the variances $Var[\mathbf{x}_L]$ and $Var\Big[\frac{\partial Cost}{\partial \mathbf{x}_0}\Big]$ of the gradients of the output nodes and the input nodes, respectively.
The phenomena of exploding/vanishing gradients occur only if the scales of the forward and backward propagation exponentially increase or decrease as the depth increases.
This means that the magnitude of the gradients will be bounded if we can prevent the scales of forward and backward propagation from exploding or vanishing. 

According to the assumptions in Section \ref{why} and  \eqref{covariance_eqn}, we can approximate the shared scalar variance of all output nodes $Var[\mathbf{x}_L]\in\mathbb{R}$ and the shared scalar variance of all input gradients $Var\Big[\frac{\partial Cost}{\partial \mathbf{x}_0}\Big]\in\mathbb{R}$ by

\begin{equation}
    \begin{aligned}
    Var[\mathbf{x}_L] &=\mathbb{E}[(\mathbf{x}_L-\overline{\mathbf{x}_L})^T(\mathbf{x}_L-\overline{\mathbf{x}_L})]/N\\
    &\approx \mathbb{E}[(\mathbf{Jx}_0)^T\mathbf{Jx}_0]/N\\
    &=\mathbb{E}[tr(\mathbf{J}^T\mathbf{J}\mathbf{x}_0\mathbf{x}_0^T)]/N
    =\sigma_x^2\cdot tr(\mathbf{J}^T\mathbf{J})/N
    \label{xvar_to_trace}
    \end{aligned}
\end{equation}

\begin{equation}
    \begin{aligned}
    Var\Big[&\frac{\partial Cost}{\partial \mathbf{x}_0}\Big]
    \\
    &=\mathbb{E}\Big[\Big(\frac{\partial Cost}{\partial \mathbf{x}_0}-\overline{\frac{\partial Cost}{\partial \mathbf{x}_0}}\Big)
    \Big(\frac{\partial Cost}{\partial \mathbf{x}_0}-\overline{\frac{\partial Cost}{\partial \mathbf{x}_0}}\Big)^T\Big]\Big/N
    \\
    &= \mathbb{E}\Big[
    \Big(\frac{\partial Cost}{\partial \mathbf{x}_L}\mathbf{J}\Big)
    \Big(\frac{\partial Cost}{\partial \mathbf{x}_L}\mathbf{J}\Big)^T\Big]\Big/N
    \\
    &=\sigma_y^2\cdot tr(\mathbf{J}^T\mathbf{J})/N,
    \end{aligned}
    \label{yvar_to_trace}
\end{equation}

where the chain rule for back-propagation, $\frac{\partial Cost}{\partial \mathbf{x}_0}=\frac{\partial Cost}{\partial \mathbf{x}_L}\frac{\partial \mathbf{x}_L}{\partial \mathbf{x}_0}=\frac{\partial Cost}{\partial \mathbf{x}_L}\mathbf{J}$, is used, and the shared scalar variance of a vector is the average of the variances of all its components.
Also, it is already known that
$tr(\mathbf{J}^T\mathbf{J})=N\cdot m_1=N\cdot(\sigma_w^2\mu_1)^L$. Thus, we have $Var[\mathbf{x}_L]=\sigma_x^2(\sigma_w^2\mu_1)^L$ and $Var\Big[\frac{\partial Cost}{\partial \mathbf{x}_0}\Big]=\sigma_y^2(\sigma_w^2\mu_1)^L$,
where $\sigma_w^2=N\cdot Var[W_{ij}]$, and $\mu_1$ is the first moment of the nonlinear activation function. It is obvious that the variances of both forward and backward propagation will neither explode nor vanish if and only if $(\sigma_w^2\mu_1)=1$.

For the weight gradient of the hidden layer $l$, the variance can be used to measure the scale distribution. Because $\frac{\partial Cost}{\partial \mathbf{W}_l}
=\mathbf{x}_{l-1}\cdot\frac{\partial Cost}{\partial \mathbf{h}_l}$ and both $\mathbf{x}_{l-1}$ and $\frac{\partial Cost}{\partial \mathbf{h}_l}$ are assumed to be zero-mean, the variance of the weight gradient can be evaluated as

\begin{equation}
    Var\Big[\frac{\partial Cost}{\partial \mathbf{W}_l}\Big]
    =
    Var[\mathbf{x}_{l-1}]\cdot
    Var\Big[\frac{\partial Cost}{\partial \mathbf{h}_l}\Big]
    \approx
    \sigma_x^2\sigma_y^2(\sigma_w^2\mu_1)^{L-1},
    \label{weight_var}
\end{equation}
where we can evaluate $Var[\mathbf{x}_{l-1}]$ and $Var\Big[\frac{\partial Cost}{\partial \mathbf{h}_l}\Big]$ using the results of the forward/backward variance propagation and split the entire network into two sub-networks. One sub-network has the input layer $\mathbf{x}_{0}$ and output layer $\mathbf{x}_{l-1}$, and the other sub-network has the input layer $\mathbf{x}_{l}$ and the output layer $\mathbf{x}_{L}$. Note that \eqref{weight_var} also implies that if, and only if, $(\sigma_w^2\mu_1)=1$, the weight gradients will never explode or vanish.

However, \eqref{rsq_moment} shows that VNI ($R_{sq}$) may still increase with the depth of the network even if $(\sigma_w^2\mu_1)=1$.
That is, the occurrence of the vanishing nodes becomes evident when $(\mu_2/\mu_1^2-1-s_1)$ is close to 1, whereas vanishing/exploding gradients occur when $(\sigma_w^2\mu_1)$ is far from 1. If the network's initialization parameter is appropriately set, so that $(\sigma_w^2\mu_1)$ is close to 1, $R_{sq}$ may still tend to 1 due to the network depth, the activation function, and the weight distribution.
Therefore, from \eqref{rsq_moment} and \eqref{weight_var}, it is clear that the problem of vanishing nodes may occur regardless of whether there are exploding/vanishing gradients.

\section{Invoke the Training Difficulty of Vanishing Nodes}
\label{invoke}

In this section, a weight initialization method is introduced to make a shallow network (with depth around 50) address the difficulty of training.
This method will satisfy the \textit{norm-preserving} condition, $(\sigma_w^2\mu_1)=1$, mentioned in Section \ref{compare}.
With this kind of weight initialization, the network cannot even learn a simple task, such as the XOR problem.
Also, an analysis on the failed training is provided.

\subsection{Tweaking the VNI} \label{tweaking_vni}

In Section \ref{why}, we have already shown that when the VNI approaches 1, the network has a representation power that is inefficient to accomplish practical tasks.
Here, we provide a weight initialization method which satisfies the norm-preserving condition introduced in Section \ref{compare}, which can help us to tweak the initial VNI of a network to nearly 1:

\begin{algorithmic}[1]
\REQUIRE
  The dimension of the input vector, $N_i$\\
  The dimension of the output vector, $N_o$\\
  The dimension of the bottleneck vector, $N_b$
\ENSURE
  Initialized weight matrix, $\mathbf{W}$ with dimensions $(N_o, N_i)$\\
\STATE Randomly sample a matrix $\mathbf{U}$ with dimensions $(N_b, N_i)$ with zero mean and unit variance
\STATE Randomly sample a matrix $\mathbf{V}$ with dimensions $(N_o, N_b)$ with zero mean and unit variance
\STATE Evaluate $\widehat{\mathbf{W}}=\mathbf{V}\cdot \mathbf{U}$
\STATE Average number of inputs and outputs: $N_{mean}=(N_i+N_o)/2$
\STATE Normalization: $\mathbf{W}=\widehat{\mathbf{W}}/\sqrt{N_b\cdot N_{mean}}$
\end{algorithmic}

Note that the normalization obeys the norm-preserving constraint introduced by \cite{xavier}.

The bottleneck dimension $N_b$ is used to make the network much ``leaner''.
That is, when a small bottleneck dimension $N_b$ is chosen, the initial effective number of nodes will be smaller without modifying the network architecture, and hence we can have the VNI nearly 1 with a shallower network architecture.
If the bottleneck dimension $N_b$ is set to $1$, then due to Step 3 of the weight initialization method, the rank of the weight matrix $\mathbf{W}$ will become only 1.
That is, the weight matrix $\mathbf{W}$ will have only 1 non-zero eigenvalue.
By \eqref{rsq_eigen}, if all the weight matrices of a network are initialized by this method, then the VNI $R_{sq}$ will tend to 1.
Therefore, we can tune the network to make the VNI tend to 1 even when the depth is not large.

An architectural view of the weight initialization method is provided in Fig. \ref{fig:tweak_init}.
It is shown that if the weight matrices are initialized by the introduced method, the effective number of nodes will become only $N_b$.
If $N_b$ is set to 1, then the initial VNI $R_{sq}$ of the network will increase to 1.

\begin{figure}[ht]
\centering
\subfloat[Ordinary weight init.]{
  \centering
  \includegraphics[width=0.46\linewidth]{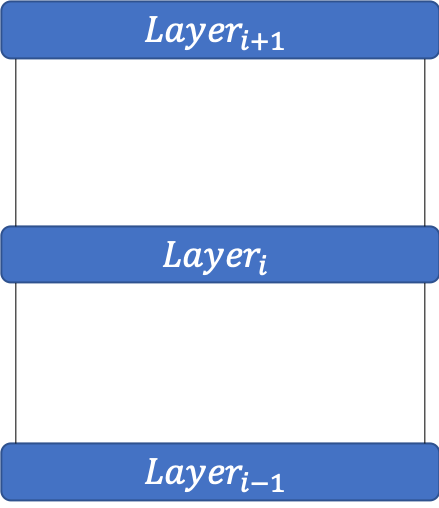}
  \label{fig:tweak_init_a}
}%
\subfloat[The introduced weight init.]{
  \centering
  \includegraphics[width=0.46\linewidth]{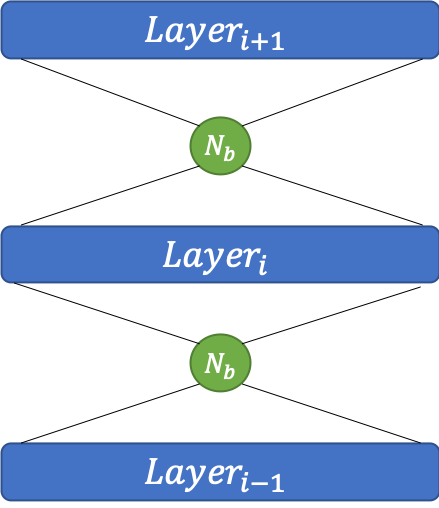}
  \label{fig:tweak_init_b}
}

\caption{
An architectural view of the introduced weight initialization.
Compared with ordinary weight initialization methods, the introduced weight initialization makes the initial weight matrix have a smaller rank $N_b$.
If the parameter $N_b$ is set to 1, then the effective number of hidden nodes will reduce to 1, which implies that the VNI $R_{sq}$ of the network will increase to 1.
}
\label{fig:tweak_init}
\end{figure}

The simulation results are provided as numerical evidences for this.
The networks are examined by the AND and the XOR tasks, where we use 2 or 4 bits as input data and the AND/XOR results
of the input data as the output label (with 2 or 4 classes), which can be written as $y_{and}^2 = x_0 \;\mathrm{AND}\; x_1$, $y_{and}^4 = (x_0 \;\mathrm{AND}\; x_1)\cdot 2+(x_2 \;\mathrm{AND}\; x_3)$ and $y_{xor} = x_0 \;\mathrm{XOR}\; x_1$.
A successful training is defined as when the testing accuracy
exceeds 0.99 (for MNIST, it's defined as 0.9) in 100 epochs over 20 runs.
The AND task can be viewed as a linearly-separable classification task, while the XOR task is a non-linearly-separable problem.

In the simulation, the network depth $L$ is set to 50 with the tanh activation function, the network width $N$ is set to 500, the bottleneck dimension $N_b$ is set to 30, and the network is trained with a $0.01$ learning rate.
The initial VNI of a network is tweaked to 1 by initializing the weight matrices with rank$=N_b$.
It is shown in Table \ref{tweak} that compared with networks with initial VNI $\neq1$, the networks to initial VNI $=1$
have insufficient representation power to learn either the XOR or the MNIST tasks.
Also, the metric $\sigma_w^2\mu_1$ is nearly 1 throughout the entire training, which implies that
the failures are not caused by the vanishing/exploding gradients problem.
That is, the vanishing nodes are indeed a problem when the VNI is nearly 1.

\begin{table}[ht]
    \caption{Tweaking the initial VNI to 1.
    }
    \label{tweak}
    \centering
    \begin{tabular}{|c|c|c|c|}
    \hline
        Tasks & \# of classes & Gaussian Init. & Initial VNI $=1$\\\hline
        AND & 2 & Success(VNI$\neq$1) & Success(VNI$=$1) \\\hline
        AND & 4 & Success(VNI$\neq$1) & Fail(VNI$=$1) \\\hline
        XOR & 2 & Success(VNI$\neq$1) & Fail(VNI$=$1) \\\hline
        MNIST & 10 & Success(VNI$\neq$1) & Fail(VNI$=$1) \\\hline
    \end{tabular}
\end{table}

\subsection{Walking dead gradients when the VNI is 1}

To analyze the failure of the training in the previous subsection, we first define a kind of gradients, to be referred to as \textit{walking dead} gradients.
The phrase ``walking dead'' consists of two words:
``walking'' means the gradients, which are used to update the weight matrices of the network, still have a non-vanishing value, but ``dead'' implies that the gradients cannot successfully train the network.
We observe that if the weight matrices are initialized as in the previous subsection, the initial VNI will be close to 1.

Below, the details of the training are provided in Table \ref{tweak}.
Table \ref{tweak} shows that although the network architectures (see Fig. \ref{fig:tweak_init}), the training environments, and the scales of the back-propagated gradients are the same,
the Gaussian initialized weight matrices can learn all the tasks
while the network with initial VNI = 1 fails and cannot escape from the occurrence of vanishing nodes.

\textbf{Walking}: As discussed in Section \ref{compare}, we have shown that the values of the gradients will not vanish if the initial weight condition ($\sigma_w^2\mu_1=1$) is met.
In the introduced weight initialization, it is obvious that the variance of the obtained weight matrix will be $Var[\mathbf{W}]=\frac{N_b\cdot Var[\mathbf{u}]\cdot Var[\mathbf{v}]}{N_b\cdot N_{mean}}$, which implies that the parameter $\sigma_w^2$ will become $Var[\mathbf{W}]\cdot N_{mean}=1$.

In Fig. \ref{fig:walking_ratio}, we would like to show that the scales of the back-propagated gradients of the scaled-Gaussian initialized network are similar to those of the network with initial VNI = 1. 
As in Table \ref{tweak}, the network depth $L$ is set to 50 with tanh activation function, the network width $N$ is set to 500, the bottleneck dimension $N_b$ is set to 30, and the network is trained with $0.01$ learning rate.
We record the gradient values at the input layer nodes over 20 runs, and then evaluate the log-scales of the gradients.
The log-ratio of the gradients between two different initial weights are provided in Fig. \ref{fig:walking_ratio}.
It can be seen that the gradients of both the scaled-Gaussian initialized network and the network with initial VNI = 1 have similar scales, which implies that the vanishing/exploding problems do not occur.
That is, the gradients still provide information from the back-propagation, but the gradients somehow cannot modify the network toward a successful model.

\begin{figure}
    \centering
    \includegraphics[width=1.0\linewidth]{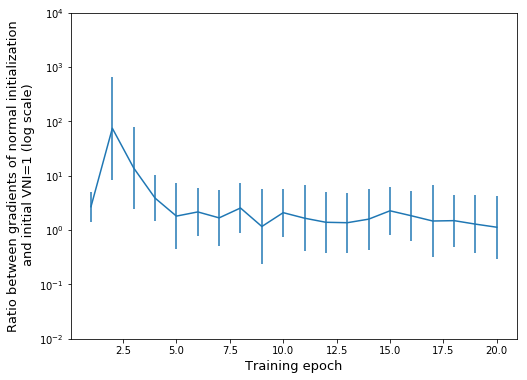}
    \caption{The log ratio of the back-propagated gradients between the scaled-Gaussian initialized network and the network with initial VNI = 1.}
    \label{fig:walking_ratio}
\end{figure}

\textbf{Dead}: In Table \ref{tweak}, the network architectures, optimization methods and the training environments for two initialization methods are totally the same.
However for the XOR task, the scaled-Gaussian initialized network can reach 100\% testing accuracy in around 5 epochs, while the testing accuracy of the network with initial VNI = 1 cannot exceed 90\% for over 100 epochs.
That is, the representation power of the network with initial VNI = 1 is not sufficient to learn the task, and therefore the network is called a \textit{dead} network.

\section{Experiments} \label{experiments}

\begin{figure*}
\centering
\newcommand{\myWidth}{0.18\textwidth}
\newcommand{\myspace}{\hspace{3mm}}
\subfloat[Tanh, Scaled Gaussian Init.]{
  \centering
  \includegraphics[width=\myWidth,trim={0 0 0 0.65cm},clip]{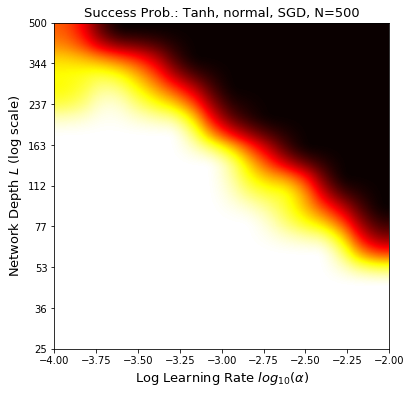}
  \label{fig:mnist_sim_s1}
}
\subfloat[ReLU, Scaled Gaussian Init.]{
  \centering
  \includegraphics[width=\myWidth,trim={0 0 0 0.65cm},clip]{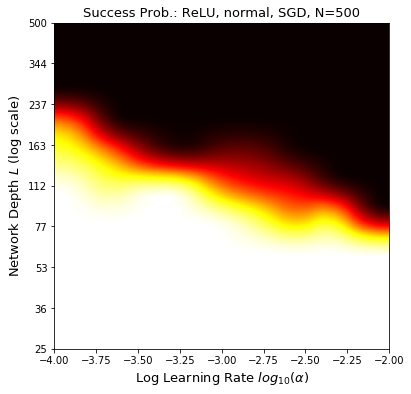}
  \label{fig:mnist_sim_s2}
}
\subfloat[Tanh, Orthogonal Init.]{
  \centering
  \includegraphics[width=\myWidth,trim={0 0 0 0.65cm},clip]{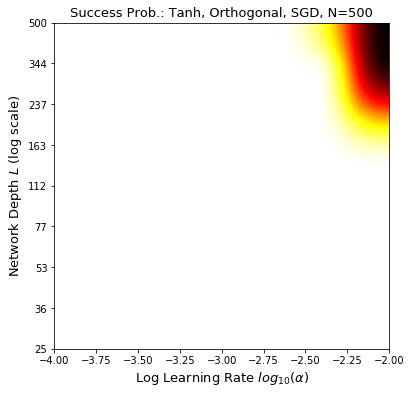}
  \label{fig:mnist_sim_s3}
}
\subfloat[ReLU, Orthogonal Init.]{
  \centering
  \includegraphics[width=\myWidth,trim={0 0 0 0.65cm},clip]{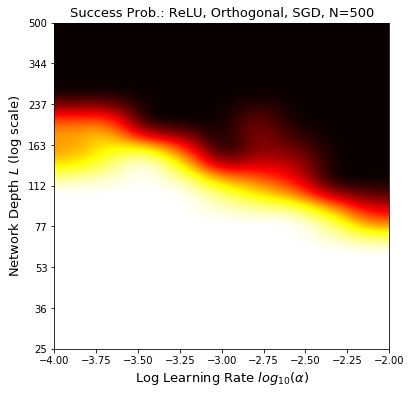}
  \label{fig:mnist_sim_s4}
}
\subfloat[Tanh, Householder Param.]{
  \centering
  \includegraphics[width=\myWidth,trim={0 0 0 0.0cm},clip]{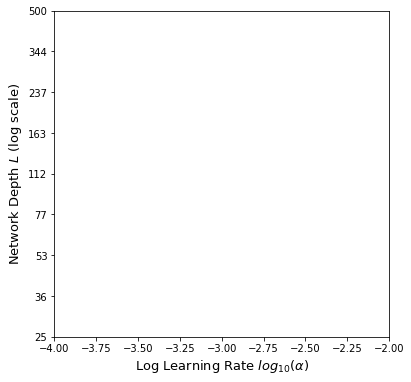}
  \label{fig:mnist_sim_s5}
}
\subfloat{
  \includegraphics[width=7mm]{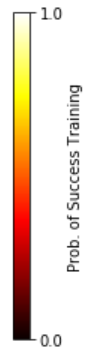}
}%
\caption{Probability of successful training for different network depths $L$ and learning rates $\alpha$ (SGD optimizer). Black denotes a zero probability of successful training.
}
\label{fig:mnist_sim}
\end{figure*}

To empirically explore the effects of the phenomenon of vanishing nodes on the training of deep neural networks, we have performed experiments with the training tasks on the MNIST dataset \cite{mnist}. Because the purpose is to focus on the problem of vanishing nodes, the networks are designed so that vanishing/exploding gradients will never occur; that is, they are initialized with weights ($\sigma_w^2\mu_1=1$).
The network is trained with a batch size of 100.
The number of successful trainings for a total of 20 runs is recorded to reflect the influence of vanishing nodes on the training process, which may lead to insufficient network representation capability, as shown in Fig. \ref{fig:sec5_sim1}.
A successful training is considered to occur when the training accuracy exceeds 90\% within 100 epochs. 
The network depth $L$ ranges from $25$ to $500$, and the network width $N$ is set to $500$.
The learning rate $\alpha$ ranges from $10^{-4}$ to $10^{-2}$ with the SGD algorithm.
Both $L$ and $\alpha$ are uniformly distributed on the logarithmic scale.
The experiments were performed in the MXNet framework\cite{mxnet}.

Considering that the MNIST dataset is too simple a task,
we would say that we intentionally chose an easy training task in order to claim that when the VNI
increases to 1, the deep network cannot even learn a simple task.
If vanishing nodes make the network fail at such a simple task, then a more challenging task should be even less possible to carry out.


Fig. \ref{fig:mnist_sim} shows the results of two different activation functions (Tanh/ReLU) with two different weight initializations (scaled-Gaussian/orthogonal from \cite{mft:linear}). When a network with tanh activation functions is initialized with orthogonal weights, the term of  $(\mu_2/\mu_1^2-1-s_1)$ in \eqref{rsq_moment} becomes zero. Therefore, its $R_{sq}$ will be the minimum value ($1/N$) and will not depend on the network depth. For the other network parameters, $(\mu_2/\mu_1^2-1-s_1) $ will not equal zero, and $R_{sq}$ still depends on the network depth. The experimental results show that the likelihood of a failed training is high when the depth $L$ and the learning rate are large. In addition, the corresponding $R_{sq}$ of failed cases becomes nearly $1$, which causes a lack of network representation power.
This implies that the vanishing nodes problem is the main reason that the training fails. A comparison of Fig. \ref{fig:mnist_sim_s3} with the other three results shows clearly that the networks with the minimum $R_{sq}$ value have the highest successful training probability.

Shallow network architectures can tolerate a greater learning rate, which is why the vanishing node problem has been ignored in many networks with small depths. In a deep network, the learning rate should be set to small values to prevent $R_{sq}$ from increasing to 1.
The reason why the behavior of $R_{sq}$ is affected by the learning rate $\alpha$ remain unexplained, suggesting the need for further investigations to better understand the relation between the learning rate and the dynamics of $R_{sq}$.
A high learning rate will cause $R_{sq}$ to be severely intensified to nearly 1, and the representation capability of the network will be reduced to that of a single perceptron, which is the main reason that the training fails.
Further analyses of the cause of failed training are provided in the following subsection.

\subsection{The cause of failed training}

In this subsection, we analyze the reason why failed training occurs from two perspectives: vanishing/exploding gradients and vanishing nodes.
First, the quantity $\sigma_w^2$ (the variance of weights at each layer ) of trained models is evaluated.
There are a total of 31,680 runs in the experiments, including 13,101 failures and 18,579 successes.
Detailed information is presented in Table \ref{num_table}.
The quantity $\sigma_w^2\mu_1$ for measuring the degree of vanishing/exploding gradients is presented in Fig. \ref{fig:succ_box} and Fig. \ref{fig:fail_box} for successful networks and failed networks.
The two figures display box and whisker plots to represent the distribution of $\sigma_w^2\mu_1$ at each network layer, and the horizontal axis indicates the depth of the trained network. 
The results show that both successful and failed networks have $\sigma_w^2\mu_1$ near one.
This indicates that both successful and failed models satisfy a condition of that prevents the networks from having vanishing/exploding gradients.

Second, the differences in $R_{sq}$ between successful and failed networks are displayed in Fig. \ref{fig:histogram}.
The horizontal axis indicates the $R_{sq}$ of trained models evaluated by \eqref{rsq_def}, and the vertical axis represents the binned frequencies of $R_{sq}$.
The blue histogram represents the $R_{sq}$ of the failed networks, and the orange histogram represents the $R_{sq}$ of the successful networks.
The $R_{sq}$ of the failed models ranges from 0.9029 to 1.0000 with a mean of 0.9949 and a standard deviation of 0.0481, and that of the successful models ranges from 0.1224 to 0.9865 with a mean of 0.3207 and a standard deviation of 0.1690.
The figure shows that the $R_{sq}$ of the failed networks mainly cluster around 1, but those of the successful networks are widely distributed.
From the analysis shown in Figs. \ref{fig:succ_box}, \ref{fig:fail_box} and \ref{fig:histogram}, it is clear that the vanishing nodes ($R_{sq}$ reaches 1) is the main cause which makes the training fail.

\begin{figure}[ht]
    \centering
    \includegraphics[width=1.0\linewidth]{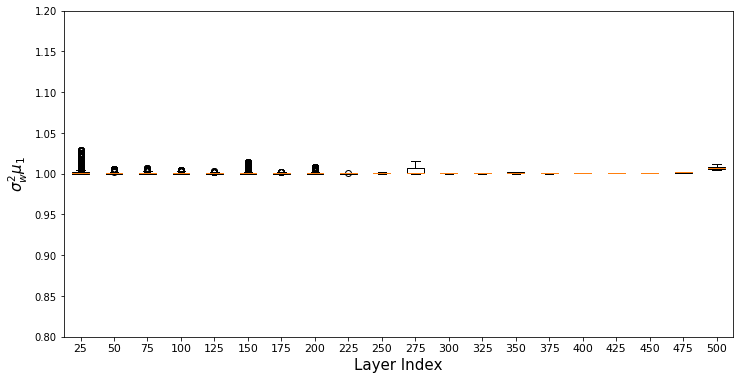}
    \caption{Box and whisker plot of $\sigma_w^2\mu_1$ for networks with successful training. There are 18,579 successful runs.
    }
    \label{fig:succ_box}
\end{figure}

\begin{figure}[ht]
    \centering
    \includegraphics[width=1.0\linewidth]{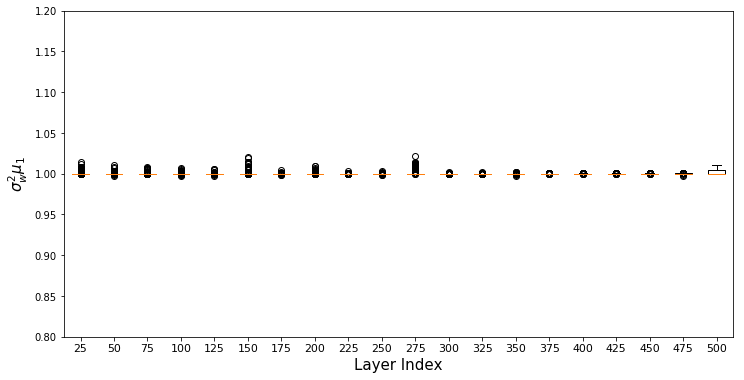}
    \caption{Box and whisker plot of $\sigma_w^2\mu_1$ for networks with failed training. There are 13,101 failed runs.
    }
    \label{fig:fail_box}
\end{figure}

\begin{figure}[ht]
    \centering
    \includegraphics[width=1.0\linewidth]{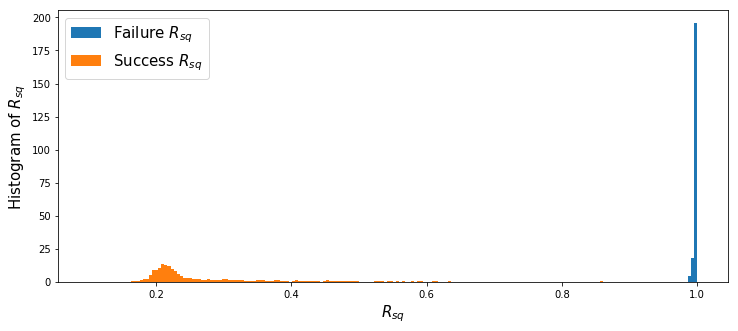}
    \caption{Histogram of $R_{sq}$ of failed/successful  networks.}
    \label{fig:histogram}
\end{figure}

\begin{table*}[ht]
    \centering
    \begin{tabular}{|c|c|c|r|r|}
    \hline
         Optimizer&Activation&Weight Init.&No. of Failure&No. of Success\\\hline
        
         \multirow{4}{*}{SGD}&Tanh&
         \multirow{2}{*}{Scaled Gaussian}&712&1268\\
         \cline{2-2}\cline{4-5}
         &ReLU&&1775&205\\\cline{2-5}
         &Tanh&\multirow{2}{*}{Orthogonal}&62&1918\\\cline{2-2}\cline{4-5}
         &ReLU&&882&1098\\\hline

         \multirow{4}{*}{SGD+momentum}&Tanh&\multirow{2}{*}{Scaled Gaussian}&982&998\\\cline{2-2}\cline{4-5}
         &ReLU&&1140&840\\\cline{2-5}
         &Tanh&\multirow{2}{*}{Orthogonal}&546&1434\\\cline{2-2}\cline{4-5}
         &ReLU&&1044&936\\\hline

         \multirow{4}{*}{Adam}&Tanh&\multirow{2}{*}{Scaled Gaussian}&609&1371\\\cline{2-2}\cline{4-5}
         &ReLU&&723&1257\\\cline{2-5}
         &Tanh&\multirow{2}{*}{Orthogonal}&354&1626\\\cline{2-2}\cline{4-5}
         &ReLU&&1465&515\\\hline
      
         \multirow{4}{*}{RMSProp}&Tanh&\multirow{2}{*}{Scaled Gaussian}&763&1217\\\cline{2-2}\cline{4-5}
         &ReLU&&827&1153\\\cline{2-5}
         &Tanh&\multirow{2}{*}{Orthogonal}&453&1527\\\cline{2-2}\cline{4-5}
         &ReLU&&764&1216\\\hline
    \end{tabular}
    \caption{The detailed numbers of successful and failed runs.}
    \label{num_table}
\end{table*}

In Fig. \ref{fig:mnist_sim_s3}, the dark region at the top right corner reveals that although the weight matrices are initially orthogonal, after the training they turn out to be non-orthogonal with a VNI close to 1.
As a motivation, we restrict the weights to orthogonal matrices by the Householder parameterization\cite{householder}.

\subsection{Orthogonal constraint: The Householder parametrization}
An $n$-dimensional weight matrix following the Householder parametrization can be represented by the product of $n$ orthogonal Householder matrices\cite{householder}.
That is, $\mathbf{W}=\mathbf{H}_n\cdot\mathbf{H}_{n-1}\cdots\mathbf{H}_1$, where the $\mathbf{H}_i$ are defined by $\mathbf{I}-2\mathbf{v}_i\mathbf{v}_i^T$ and the $\mathbf{v}_i$ are vectors with unit norm.

The results of the Householder parameterization are provided in Fig. \ref{fig:mnist_sim_s5}, where the disappearance of the dark region implies that a successful training can be attained if the weight matrices remain orthogonal during the training process.
Also, we provide other results in Table \ref{orth}.
It shows that for the MNIST classification, there is training success for 20 runs even with a large learning rate ($10^{-2}$) and a large network depth ($500$) are used, which the orthogonal initialization cannot attain.
This shows that forcing the VNI be small during the whole of the training process can prevent the network from failing at training, and it also implies that the failure cases are indeed caused by the vanishing nodes.

\begin{table*}[ht]
    \caption{Forcing the weights to remain orthogonal throughout the entire training process.
    }
    \label{orth}
    \centering
    \begin{tabular}{|c|c|c|c|}
    \hline
        Number of layers $L$ & Scaled Gaussian Init. & Scaled Orthogonal Init. &
        Householder Parametrization\\\hline
        50 & Success (VNI $\neq1$) & Success (VNI $\neq1$) & Success (VNI $\neq1$) \\\hline
        200 & Fail (VNI $=1$) & Success (VNI $\neq1$) & Success (VNI $\neq1$) \\\hline
        500 & Fail (VNI $=1$) & Fail (VNI $=1$) & Success (VNI $\neq1$) \\\hline
    \end{tabular}
\end{table*}

\section{Conclusion} \label{conclusion}

The phenomenon of \textit{vanishing nodes} has been investigated as another challenge  when training deep networks.
Like the vanishing/exploding gradients problem, vanishing nodes also make training deep networks difficult.
The hidden nodes in a deep neural network become more correlated as the depth of the network increases, as the similarity between the hidden nodes increases.
Because a similarity between nodes results in redundancy, the effective number of hidden nodes in a network decreases.
This phenomenon is called\textit{vanishing nodes}.

To measure the degree of vanishing nodes, the \textit{vanishing nodes indicator} (VNI) is proposed.
It is shown theoretically that the VNI is proportional to the depth of the network and inversely proportional to the width of the network, which is consistent with the experimental results.
Also, it is mathematically proven that when the VNI equals 1, the effective number of nodes of the network vanishes to 1, and hence the network cannot learn some simple tasks.

Moreover, we explore the difference between vanishing/exploding gradients and vanishing nodes, and another weight initialization method which prevents the network from having vanishing/exploding gradients is proposed to tweak the initial VNI to 1.
The numerical results of the introduced weight initialization reveal that when the VNI is set to 1, a relatively shallow network may still not to be able to not accomplish an easy task.
The back-propagated gradients of the network still have sufficient magnitudes, but cannot make the training succeed, and therefore we say that the gradients are \textit{walking dead}.

Finally, experimental results show that the training fails when there are vanishing nodes, but that if an orthogonal weight parametrization is applied to the network, the problem of vanishing nodes will be eased, and the training of the deep neural network will succeed.
This implies that the vanishing nodes are indeed the cause of the difficulty of the training. 

\subsection*{Acknowledgement}

This research is partially supported by the "Cyber Security Technology Center" of National Taiwan University (NTU), sponsored by the Ministry of Science and Technology, Taiwan, R.O.C. under Grant no. MOST 108-2634-F-002-009.

\bibliographystyle{plain}
\bibliography{references}

\ifieee
\begin{IEEEbiographynophoto}{Wen-Yu Chang}
Biography text here.
\end{IEEEbiographynophoto}

\begin{IEEEbiographynophoto}{Tsung-Nan Lin}
Biography text here.
\end{IEEEbiographynophoto}
\fi

\end{document}